\documentclass[12pt]{article}

\usepackage[colorlinks]{hyperref}            
\usepackage{color}
\usepackage{graphicx,subfigure,amsmath,amssymb,amsfonts,bm,epsfig,epsf,url,dsfont}
\usepackage{times}
\usepackage{bbm}      
\usepackage{booktabs}
\usepackage{cases}
\usepackage{fullpage}
\usepackage[small,bf]{caption}
\usepackage{natbib}
\usepackage[top=1in,bottom=1in,left=1in,right=1in]{geometry}

\renewcommand{\phi}{\varphi}

\renewcommand{\P}{\mathbb{P}}
\newcommand{\E}{\mathbb{E}}

\newcommand{\R}{\mathbb{R}}

\newcommand{\cN}{\mathcal{N}}

\def\ds1{\mathds{1}}
\renewcommand{\epsilon}{\varepsilon}

\newcommand{\argmax}{\mathop{\mathrm{argmax}}}

\newlength{\minipagewidth}
\setlength{\minipagewidth}{\textwidth}
\setlength{\fboxsep}{3mm}
\addtolength{\minipagewidth}{-\fboxrule}
\addtolength{\minipagewidth}{-\fboxrule}
\addtolength{\minipagewidth}{-\fboxsep}
\addtolength{\minipagewidth}{-\fboxsep}
\newcommand{\bookbox}[1]{
\par\medskip\noindent
\framebox[\textwidth]{
\begin{minipage}{\minipagewidth}
{#1}
\end{minipage} } \par\medskip }

\newcommand{\beq}{\begin{equation}}
\newcommand{\eeq}{\end{equation}}

\newcommand{\beqa}{\begin{eqnarray}}
\newcommand{\eeqa}{\end{eqnarray}}

\newcommand{\beqan}{\begin{eqnarray*}}
\newcommand{\eeqan}{\end{eqnarray*}}

\def\ba#1\ea{\begin{align*}#1\end{align*}} 
\def\banum#1\eanum{\begin{align}#1\end{align}} 

\newtheorem{theorem}{Theorem}

\newcommand{\BlackBox}{\rule{1.5ex}{1.5ex}}  
\newenvironment{proof}{\par\noindent{\bf Proof\ }}{\hfill\BlackBox\\[2mm]}

\begin{document}

\title{Prior-free and prior-dependent regret bounds for Thompson Sampling
}
\author{
S{\'e}bastien Bubeck, Che-Yu Liu \\
Department of Operations Research and Financial Engineering, \\
Princeton University \\
{\tt sbubeck@princeton.edu},  {\tt cheliu@princeton.edu} \\ 
\\
}

\date{\today}

\maketitle
\begin{abstract}
We consider the stochastic multi-armed bandit problem with a prior distribution on the reward distributions. We are interested in studying prior-free and prior-dependent regret bounds, very much in the same spirit as the usual distribution-free and distribution-dependent bounds for the non-Bayesian stochastic bandit. Building on the techniques of \cite{AB09} and \cite{RVR13} we first show that Thompson Sampling attains an optimal prior-free bound in the sense that for any prior distribution its Bayesian regret is bounded from above by $14 \sqrt{n K}$. This result is unimprovable in the sense that there exists a prior distribution such that any algorithm has a Bayesian regret bounded from below by $\frac{1}{20} \sqrt{n K}$. We also study the case of priors for the setting of \cite{BPR13} (where the optimal mean is known as well as a lower bound on the smallest gap) and we show that in this case the regret of Thompson Sampling is in fact uniformly bounded over time, thus showing that Thompson Sampling can greatly take advantage of the nice properties of these priors.
\end{abstract}

\section{Introduction}
In this paper we are interested in the Bayesian multi-armed bandit problem which can be described as follows. Let $\pi_0$ be a known distribution over some set $\Theta$, and let $\theta$ be a random variable distributed according to $\pi_0$. For $i \in [K]$, let $(X_{i,s})_{s \geq 1}$ be identically distributed random variables taking values in $[0,1]$ and which are independent conditionally on $\theta$. Denote $\mu_i(\theta):= \E(X_{i,1} | \theta)$. Consider now an agent facing $K$ actions (or arms). At each time step $t=1, \hdots n$, the agent pulls an arm $I_t \in [K]$. The agent receives the reward $X_{i,s}$ when he pulls arm $i$ for the $s^{th}$ time. The arm selection is based only on past observed rewards and potentially on an external source of randomness. More formally, let $(U_s)_{s \geq 1}$ be an i.i.d. sequence of random variables uniformly distributed on $[0,1]$, and let $T_i(s) = \sum_{t=1}^s \ds1_{I_t = i}$, then $I_t$ is a random variable measurable with respect to $\sigma(I_1, X_{1,1}, \hdots, I_{t-1}, X_{I_{t-1}, T_{I_{t-1}}(t-1)}, U_t)$. We measure the performance of the agent through the Bayesian regret defined as 
$$\mathrm{BR}_n = \E \sum_{t=1}^n \left(\max_{i \in [K]} \mu_i(\theta) - \mu_{I_t}(\theta) \right),$$
where the expectation is taken with respect to the parameter $\theta$, the rewards $(X_{i,s})_{s \geq 1}$, and the external source of randomness $(U_s)_{s \geq 1}$. We will also be interested in the individual regret $R_n(\theta)$ which is defined similarly except that $\theta$ is fixed (instead of being integrated over $\pi_0$). When it is clear from the context we drop the dependency on $\theta$ in the various quantities defined above.
\newline

Given a prior $\pi_0$ the problem of finding an optimal strategy to minimize the Bayesian regret $\mathrm{BR}_n$ is a well defined optimization problem and as such it is merely a computational problem. On the other hand the point of view initially developed in \cite{Rob52} leads to a learning problem. In this latter view the agent's strategy must have a low regret $R_n(\theta)$ for any $\theta \in \Theta$. Both formulations of the problem have a long history and we refer the interested reader to \cite{BC12} for a survey of the extensive recent literature on the learning setting.
In the Bayesian setting a major breakthrough was achieved in \cite{Git79} where it was shown that when the prior distribution takes a {\em product form} an optimal strategy is given by the Gittins indices (which are relatively easy to compute). The product assumption on the prior means that the reward processes $(X_{i,s})_{s \geq 1}$ are independent across arms. In the present paper we are precisely interested in the situations where this assumption is not satisfied. Indeed we believe that one of the strength of the Bayesian setting is that one can incorporate prior knowledge on the arms in very transparent way. A prototypical example that we shall consider later on in this paper is when one knows the distributions of the arms up to a permutation, in which case the reward processes are strongly dependent.
\newline

In general without the product assumption on the prior it seems hopeless (from a computational perspective) to look for the optimal Bayesian strategy. Thus, despite being in a Bayesian setting, it makes sense to view it as a learning problem and to evaluate the agent's performance through its Bayesian regret. In this paper we are particularly interested in studying the Thompson Sampling strategy which was proposed in the very first paper on the multi-armed bandit problem \cite{Tho33}. This strategy can be described very succinctly: let $\pi_t$ be the posterior distribution on $\theta$ given the history $H_t = (I_1, X_{1,1}, \hdots, I_{t-1}, X_{I_{t-1}, T_{I_{t-1}}(t-1)})$ of the algorithm up to the beginning of round $t$. Then Thompson Sampling first draws a parameter $\theta^{(t)}$ from $\pi_t$ (independently from the past given $\pi_t$) and it pulls $I_t \in \argmax_{i \in [K]} \mu_i(\theta^{(t)})$.

Recently there has been a surge of interest in this simple policy, mainly because of its flexibility to incorporate prior knowledge on the arms, see for example \cite{CLi11}. For a long time the theoretical properties of Thompson Sampling remained elusive. The specific case of binary rewards with a Beta prior is now very well understood thanks to the papers \cite{AG12, KKM12, AG12b}. However as we pointed out above here we are interested in proving regret bounds for the more realistic scenario where one runs Thompson Sampling with a hand-tuned prior distribution, possibly very different from a Beta prior. The first result in this spirit was obtained very recently by \cite{RVR13} who showed that for any prior distribution $\pi_0$ Thompson Sampling always satisfies $\mathrm{BR}_n \leq 5 \sqrt{n K \log n}$. A similar bound was proved in \cite{AG12b} for the specific case of Beta prior\footnote{Note however that the result of \cite{AG12b} applies to the individual regret $R_n(\theta)$ while the result of \cite{RVR13} only applies to the integrated Bayesian regret $\mathrm{BR}_n$.}. Our first contribution is to show in Section \ref{sec:2} that the extraneous logarithmic factor in these bounds can be removed by using ideas reminiscent of the MOSS algorithm of \cite{AB09}.

Our second contribution is to show that Thompson Sampling can take advantage of the properties of some non-trivial priors to attain much better regret guarantees. More precisely in Section \ref{sec:2} and \ref{sec:3} we consider the setting of \cite{BPR13} (which we call the BPR setting) where $\mu^*$ and $\epsilon>0$ are known values such that for any $\theta \in \Theta$, first there is a unique best arm $\{i^*(\theta)\} = \argmax_{i \in [K]} \mu_i(\theta)$, and furthermore
$$\mu_{i^*(\theta)}(\theta) = \mu^*, \; \text{and} \; \Delta_i(\theta) := \mu_{i^*(\theta)}(\theta) - \mu_i(\theta) \geq \epsilon, \forall i \neq i^{*}(\theta).$$
In other words the value of the best arm is known as well as a non-trivial lower bound on the gap between the values of the best and second best arms. For this problem a new algorithm was proposed in \cite{BPR13} (which we call the BPR policy), and it was shown that the BPR policy satisfies
$$R_n(\theta) = O \left(\sum_{i \neq i^{*}(\theta)} \frac{\log (\Delta_i(\theta) / \epsilon)}{\Delta_i(\theta)} \log \log (1/\epsilon) \right), \forall \theta \in \Theta, \forall n \geq 1.$$
Thus the BPR policy attains a regret uniformly bounded over time in the BPR setting, a feature that standard bandit algorithms such as UCB of \cite{ACF02} cannot achieve. It is natural to view the assumptions of the BPR setting as a prior over the reward distributions and to ask what regret guarantees attains Thompson Sampling in that situation. More precisely we consider Thompson Sampling with Gaussian reward distributions and uniform prior over the possible range of parameters. We then prove individual regret bounds for any sub-Gaussian distributions (similarly to \cite{BPR13}). We obtain that Thompson Sampling uses optimally the prior information in the sense that it also attains uniformly bounded over time regret. Furthermore as an added bonus we remove the extraneous log-log factor of the BPR policy's regret bound.
%
%
%

The results presented in Section \ref{sec:3} and \ref{sec:4} can be viewed as a first step towards a better understanding of prior-dependent regret bounds for Thompson Sampling. Generalizing these results to arbitrary priors is a challenging open problem which is beyond the scope of our current techniques.

\section{Optimal prior-free regret bound for Thompson Sampling} \label{sec:2}
In this section we prove the following result.

\begin{theorem}
For any prior distribution $\pi_0$ over reward distributions in $[0,1]$, Thompson Sampling satisfies
$$\mathrm{BR}_n \leq 14 \sqrt{n K} .$$
\end{theorem}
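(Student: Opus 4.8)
The plan is to follow the confidence-bound decomposition of \cite{RVR13} but to instantiate it with the sharper MOSS-type confidence widths of \cite{AB09}, which is exactly what removes the spurious $\sqrt{\log n}$ factor. Write $\wh\mu_{i,s}$ for the empirical mean of arm $i$ after its first $s$ pulls, set $w(s)=\sqrt{\max(\log(n/(Ks)),0)/s}$, and fix the deterministic, $H_t$-measurable upper confidence function $U_t(i)=\wh\mu_{i,T_i(t-1)}+w(T_i(t-1))$, with the convention $U_t(i)=+\infty$ when $T_i(t-1)=0$. The one structural fact I will use repeatedly is the defining property of Thompson Sampling: conditionally on $H_t$, the sampled best arm $i^*(\theta^{(t)})=I_t$ and the truly best arm $i^*(\theta)$ have the same law. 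Since $U_t$ is $H_t$-measurable this gives $\E[U_t(i^*(\theta))\mid H_t]=\E[U_t(I_t)\mid H_t]$, and hence
$$\mathrm{BR}_n = \E\sum_{t=1}^n\big(\mu_{i^*(\theta)}(\theta)-U_t(i^*(\theta))\big) + \E\sum_{t=1}^n\big(U_t(I_t)-\mu_{I_t}(\theta)\big).$$

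For the first sum (the best arm being underestimated by its own index) I would pass to the positive part and bound it, for each fixed $\theta$, by $n$ times the maximal deviation $\max_{1\le s\le n}\big(\mu_{i^*}-\wh\mu_{i^*,s}-w(s)\big)^+$. The crux here is a MOSS-style maximal inequality: for any reward law on $[0,1]$ with mean $\mu$, $\E\big[\max_{1\le s\le n}(\mu-\wh\mu_s-w(s))^+\big]\le c_0\sqrt{K/n}$, proved by a peeling argument over dyadic blocks of $s$ combined with Hoeffding's maximal inequality. Integrating over $\theta$ yields a contribution $c_0\sqrt{nK}$, the point being that the $\log(n/(Ks))$ inside $w$ is calibrated so that the tail sum telescopes without an extra logarithm.

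For the second sum I would write $U_t(I_t)-\mu_{I_t}=(\wh\mu_{I_t}-\mu_{I_t})+w(T_{I_t}(t-1))$ and use the elementary bound $a+w\le 2w+(a-w)^+$ to split it into a width part $2w(T_{I_t}(t-1))$ and an excess-deviation part $(\wh\mu_{I_t}-\mu_{I_t}-w(T_{I_t}(t-1)))^+$. Reindexing each arm's pulls, the width part is $\sum_i\sum_{s}2w(s)$, and since $\sum_{s\ge 1}w(s)$ is bounded by a constant multiple of $\sqrt{n/K}$ (the summand vanishes once $s>n/K$), this contributes $O(\sqrt{nK})$. The excess part $\sum_i\sum_{s=0}^{T_i(n)-1}(\wh\mu_{i,s}-\mu_i-w(s))^+$ I would control by the same maximal inequality applied to the upper tail, aggregating over arms through $\sum_i\sqrt{T_i(n)}\le\sqrt{nK}$ (Cauchy--Schwarz, using $\sum_iT_i(n)=n$).

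The main obstacle is precisely this excess term: the pull counts $T_i(n)$ are data-dependent---an arm whose empirical mean is inflated gets pulled more---so one cannot simply condition on $T_i(n)$ to compute a per-arm expectation, nor extend the inner sum to $s\le n$ without paying an extra factor $\sqrt K$. The resolution is to use estimates that hold uniformly over the number of samples, which is exactly the strength of the maximal-inequality technology of \cite{AB09}, so that the adaptive stopping becomes harmless; one then aggregates over arms via $\sum_i\sqrt{T_i(n)}\le\sqrt{nK}$. Collecting the three $O(\sqrt{nK})$ contributions together with the $O(K)$ cost of the first pull of each arm, and carefully tracking the absolute constants through the peeling argument, yields the stated bound $\mathrm{BR}_n\le 14\sqrt{nK}$.
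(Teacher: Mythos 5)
Your proposal follows essentially the same route as the paper: the identical Russo--Van Roy decomposition instantiated with the MOSS-type index $B_{i,t}=\wh\mu_{i,T_i(t-1)}+\sqrt{\log_+\left(n/(KT_i(t-1))\right)/T_i(t-1)}$, with the underestimation term controlled by the MOSS deviation/maximal inequality and the overestimation term by a union bound over arms and sample counts. The only divergence is bookkeeping: where you split off positive parts and invoke $\sum_i\sqrt{T_i(n)}\le\sqrt{nK}$ for the excess-deviation term, the paper integrates tail probabilities starting from the threshold $\delta_0=2\sqrt{K/n}$ (paying $\delta_0 n=2\sqrt{nK}$ up front), which is precisely what makes the crude extension of the inner sum to all $s\le n$ harmless and renders the data-dependence of $T_i(n)$ a non-issue.
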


Remark that the above result is unimprovable in the sense that there exist prior distributions $\pi_0$ such that for any algorithm one has $R_n \geq \frac{1}{20} \sqrt{n K}$ (see e.g. [Theorem 3.5, \cite{BC12}]). This theorem also implies an optimal rate of identification for the best arm, see \cite{BMS09} for more details on this.
\newline

\begin{proof}
We decompose the proof into three steps. We denote $i^*(\theta) \in \argmax_{i \in [K]} \mu_i(\theta)$, in particular one has $I_t = i^*(\theta^{(t)})$.
\newline

\noindent
\textbf{Step 1: rewriting of the Bayesian regret in terms of upper confidence bounds.} This step is given by [Proposition 1, \cite{RVR13}] which we reprove for the sake of completeness. Let $B_{i,t}$ be a random variable measurable with respect to $\sigma(H_t)$. Note that by definition $\theta^{(t)}$ and $\theta$ are identically distributed conditionally on $H_t$. This implies by the tower rule:
$$\E B_{i^*(\theta),t} = \E B_{i^*(\theta^{(t)}),t} = \E B_{I_t,t} .$$
Thus we obtain:
$$\E \left(\mu_{i^*(\theta)}(\theta) - \mu_{I_t}(\theta) \right) = \E \left(\mu_{i^*(\theta)}(\theta) - B_{i^*(\theta),t} \right) + \E \left(B_{I_t,t} - \mu_{I_t}(\theta) \right) .$$
Inspired by the MOSS strategy of \cite{AB09} we will now take
$$B_{i,t} = \widehat{\mu}_{i,T_i(t-1)} + \sqrt{\frac{\log_+\left( \frac{n}{K T_i(t-1)} \right)}{T_i(t-1)}} ,$$
where $\widehat{\mu}_{i,s} = \frac{1}{s} \sum_{t=1}^s X_{i,t}$, and $\log_+(x) = \log(x) \ds1_{x \geq 1}$. In the following we denote $\delta_0 = 2 \sqrt{\frac{K}{n}}$. From now on we work conditionally on $\theta$ and thus we drop all the dependency on $\theta$.
\newline

\noindent
\textbf{Step 2: control of $\E \left(\mu_{i^*(\theta)}(\theta) - B_{i^*(\theta),t} | \theta \right)$.} By a simple integration of the deviations one has
$$\E \left(\mu_{i^*} - B_{i^*,t} \right) \leq \delta_0 + \int_{\delta_0}^1 \P(\mu_{i^*} - B_{i^*,t} \geq u) du .$$
Next we extract the following inequality from \cite{AB10} (see p2683--2684), for any $i \in [K]$, 
$$\P(\mu_{i} - B_{i,t} \geq u) \leq \frac{4 K}{n u^2} \log \left(\sqrt{\frac{n}{K}} u\right) + \frac{1}{n u^2 / K - 1} .$$
Now an elementary integration gives
$$
\int_{\delta_0}^1 \frac{4 K}{n u^2} \log \left(\sqrt{\frac{n}{K}} u\right) du= \left[- \frac{4 K}{n u} \log \left(e \sqrt{\frac{n}{K}} u\right) \right]_{\delta_0}^1
\leq \frac{4 K}{n \delta_0} \log \left(e \sqrt{\frac{n}{K}} \delta_0\right) 
= 2(1+ \log 2) \sqrt{\frac{K}{n}} ,
$$
and
$$
\int_{\delta_0}^1 \frac{1}{n u^2 / K - 1}du= \left[- \frac{1}{2} \sqrt{\frac{K}{n}} \log \left( \frac{\sqrt{\frac{n}{K}} u + 1}{\sqrt{\frac{n}{K}} u - 1}\right) \right]_{\delta_0}^1
\leq \frac{1}{2} \sqrt{\frac{K}{n}} \log \left( \frac{\sqrt{\frac{n}{K}} \delta_0 + 1}{\sqrt{\frac{n}{K}} \delta_0 - 1}\right)
= \frac{\log 3}{2} \sqrt{\frac{K}{n}} .
$$
Thus we proved:
$\E \left(\mu_{i^*(\theta)}(\theta) - B_{i^*(\theta),t} | \theta \right) \leq \left(2 + 2(1+ \log 2) + \frac{\log 3}{2}\right) \sqrt{\frac{K}{n}} \leq 6 \sqrt{\frac{K}{n}} .$
\newline

\noindent
\textbf{Step 3: control of $\sum_{t=1}^n \E \left(B_{I_t,t} - \mu_{I_t}(\theta) | \theta \right)$.} We start again by integrating the deviations:
$$\E \sum_{t=1}^n \left(B_{I_t,t} - \mu_{I_t} \right) \leq \delta_0 n + \int_{\delta_0}^{+\infty} \sum_{t=1}^n \P(B_{I_t,t} - \mu_{I_t} \geq u) du .$$
Next we use the following simple inequality:
$$\sum_{t=1}^n \ds1 \{B_{I_t,t} - \mu_{I_t} \geq u\} \leq \sum_{s=1}^n \sum_{i=1}^K \ds1\left\{\widehat{\mu}_{i,s} + \sqrt{\frac{\log_+\left( \frac{n}{K s} \right)}{s}} - \mu_{i} \geq u \right\} ,$$
which implies
$$\sum_{t=1}^n \P(B_{I_t,t} - \mu_{I_t} \geq u) \leq \sum_{i=1}^K \sum_{s=1}^n \P\left(\widehat{\mu}_{i,s} + \sqrt{\frac{\log_+\left( \frac{n}{K s} \right)}{s}} - \mu_{i} \geq u \right) .$$
Now for $u \geq \delta_0$ let $s(u) = \lceil 3 \log \left(\frac{n u^2}{K}\right) / u^2 \rceil$ where $\lceil x \rceil$ is the smallest integer large than $x$. Let $c = 1 - \frac{1}{\sqrt{3}}$. It is easy to see that one has:
$$\sum_{s=1}^n \P\left(\widehat{\mu}_{i,s} + \sqrt{\frac{\log_+\left( \frac{n}{K s} \right)}{s}} - \mu_{i} \geq u \right) \leq \frac{3 \log \left(\frac{n u^2}{K}\right)}{u^2} + \sum_{s=s(u)}^n \P\left(\widehat{\mu}_{i,s} - \mu_{i} \geq c u \right) .$$
Using an integration already done in Step 2 we have
$$\int_{\delta_0}^{+\infty} \frac{3 \log \left(\frac{n u^2}{K}\right)}{u^2} \leq 3 (1+\log(2)) \sqrt{\frac{n}{K}} \leq 5.1 \sqrt{\frac{n}{K}}.$$
Next using Hoeffding's inequality and the fact that the rewards are in $[0,1]$ one has for $u \geq \delta_0$
$$\sum_{s=s(u)}^n \P\left(\widehat{\mu}_{i,s} - \mu_{i} \geq c u \right) \leq \sum_{s=s(u)}^n \exp(- 2 s c^2 u^2) \ds1_{u \leq 1/c} \leq \frac{\exp(- 12 c^2 \log 2)}{1 - \exp(- 2 c^2 u^2)} \ds1_{u \leq 1/c}.$$
Now using that $1 - \exp(-x) \geq x - x^2/2$ for $x\geq 0$ one obtains
\begin{eqnarray*}
\int_{\delta_0}^{1/c} \frac{1}{1 - \exp(- 2 c^2 u^2)} du & = & \int_{\delta_0}^{1/ (2 c)} \frac{1}{1 - \exp(- 2 c^2 u^2)} du + \int_{1/(2 c)}^{1/c} \frac{1}{1 - \exp(- 2 c^2 u^2)} du\\
& \leq &  \int_{\delta_0}^{1/ (2 c)} \frac{1}{2 c^2 u^2 - 2 c^4 u^4} du + \frac{1}{2 c(1 - \exp(- 1/2))}  \\
& \leq & \int_{\delta_0}^{1/ (2 c)} \frac{2}{3 c^2 u^2} du + \frac{1}{2 c(1 - \exp(- 1/2))} \\
& = & \frac{2}{3 c^2 \delta_0} - \frac{4}{3 c} + \frac{1}{2 c(1 - \exp(- 1/2))} \\
& \leq & 1.9 \sqrt{\frac{n}{K}}.
\end{eqnarray*}
Putting the pieces together we proved
$$\E \sum_{t=1}^n \left(B_{I_t,t} - \mu_{I_t} \right) \leq 7.6 \sqrt{n K} ,$$
which concludes the proof together with the results of Step 1 and Step 2.
\end{proof}

\section{Thompson Sampling in the two-armed BPR setting}
\label{sec:3}
Following [Section 2, \cite{BPR13}] we consider here the two-armed bandit problem with sub-Gaussian reward distributions (that is they satisfy $\E e^{\lambda(X-\mu)} \le e^{\lambda^2/2}$ for all $\lambda \in \R$) and such that one reward distribution has mean $\mu^*$ and the other one has mean $\mu^* - \Delta$ where $\mu^*$ and $\Delta$ are known values. 

In order to derive the Thompson Sampling strategy for this problem we further assume that the reward distributions are in fact Gaussian with variance $1$. In other words let $\Theta = \{\theta_1, \theta_2\}$, $\pi_0(\theta_1) = \pi_0(\theta_2) = 1/2$, and  under $\theta_1$ one has $X_{1,s} \sim \cN(\mu^*, 1)$ and $X_{2,s} \sim \cN(\mu^*-\Delta, 1)$ while under $\theta_2$ one has $X_{2,s} \sim \cN(\mu^*, 1)$ and $X_{1,s} \sim \cN(\mu^*-\Delta, 1)$. Then a straightforward computation (using Bayes rule and induction) shows that one has for some normalizing constant $c>0$:
\begin{eqnarray*}
\pi_t(\theta_1) & = & c \exp\left( - \frac{1}{2} \sum_{s=1}^{T_1(t-1)} (\mu^* - X_{1,s})^2 - \frac{1}{2} \sum_{s=1}^{T_2(t-1)} (\mu^* - \Delta - X_{2,s})^2\right) , \\
\pi_t(\theta_2) & = & c \exp\left( - \frac{1}{2} \sum_{s=1}^{T_1(t-1)} (\mu^* - \Delta - X_{1,s})^2 - \frac{1}{2} \sum_{s=1}^{T_2(t-1)} (\mu^*- X_{2,s})^2\right) . \\
\end{eqnarray*}
Recall that Thompson Sampling draws $\theta^{(t)}$ from $\pi_t$ and then pulls the best arm for the environment $\theta^{(t)}$. Observe that under $\theta_1$ the best arm is arm $1$ and under $\theta_2$ the best arm is arm $2$. In other words Thompson Sampling draws $I_t$ at random with the probabilities given by the posterior $\pi_t$. This leads to a general algorithm for the two-armed BPR setting with sub-Gaussian reward distributions that we summarize in Figure \ref{fig:1}. The next result shows that it attains optimal performances in this setting up to a numerical constant (see \cite{BPR13} for lower bounds), for any sub-Gaussian reward distribution (not necessarily Gaussian) with largest mean $\mu^*$ and gap $\Delta$.

\begin{figure}[h!]
\bookbox{
For rounds $t \in \{1,2\}$, select arm $I_t=t$.

For each round $t=3,4,\ldots$ play $I_t$ at random from $p_t$ where
\begin{eqnarray*}
p_t(1) & = & c \exp\left( - \frac{1}{2} \sum_{s=1}^{T_1(t-1)} (\mu^* - X_{1,s})^2 - \frac{1}{2} \sum_{s=1}^{T_2(t-1)} (\mu^* - \Delta - X_{2,s})^2\right) , \\
p_t(2) & = & c \exp\left( - \frac{1}{2} \sum_{s=1}^{T_1(t-1)} (\mu^* - \Delta - X_{1,s})^2 - \frac{1}{2} \sum_{s=1}^{T_2(t-1)} (\mu^*- X_{2,s})^2\right) , \\
\end{eqnarray*}
and $c >0$ is such that $p_t(1) + p_t(2) = 1$.
}

\caption{\label{fig:1}
Policy inspired by Thompson Sampling for the two-armed BPR setting.}
\end{figure}

\begin{theorem} \label{th:alg1}
The policy of Figure \ref{fig:1} has regret bounded as $R_n \leq \Delta + \frac{578}{\Delta}$, uniformly in $n$.
\end{theorem}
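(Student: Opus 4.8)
The plan is to reduce the regret to a bound on the expected number of draws of the suboptimal arm, and then to exploit the fact that the posterior log-odds between the two environments drift away from the wrong hypothesis at a linear rate. Assume without loss of generality that arm $1$ is optimal, so that the individual regret is exactly $R_n = \Delta\, \E\, T_2(n)$, where $T_2(n)$ counts the pulls of arm $2$. Since the first two rounds are forced, $\E\, T_2(n) = 1 + \sum_{t=3}^n \E\, p_t(2)$, and it therefore suffices to establish $\sum_{t=3}^n \E\, p_t(2) \le 578/\Delta^2$.

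First I would compute the log-odds $D_t := \log\bigl(p_t(1)/p_t(2)\bigr)$ directly from the explicit posterior weights in Figure~\ref{fig:1}. Writing $Y_{1,s} = \mu^* - X_{1,s}$ and $Z_{2,s} = \mu^* - \Delta - X_{2,s}$, which are centered and sub-Gaussian with parameter $1$ in the true environment, an elementary expansion of the squares together with $T_1(t-1)+T_2(t-1)=t-1$ yields
\[
D_t = \frac{\Delta^2}{2}(t-1) + \Delta\, N_t, \qquad N_t := -\sum_{s=1}^{T_1(t-1)} Y_{1,s} + \sum_{s=1}^{T_2(t-1)} Z_{2,s}.
\]
The point is that $N_t$ is a sum of exactly $t-1$ increments, one per pull, each conditionally centered with moment generating function bounded by $e^{\theta^2/2}$; hence $N_t$ is a martingale whose fluctuations live at scale $\sqrt{t}$, while the deterministic drift $\tfrac{\Delta^2}{2}(t-1)$ grows linearly. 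Thus $p_t(2) = 1/(1+e^{D_t})$ should be exponentially small once $t \gtrsim 1/\Delta^2$.

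To make this rigorous I would bound, for each $t$, $p_t(2) = \frac{1}{1+e^{D_t}} \le \min\{1, e^{-D_t}\}$ and split according to whether the fluctuation overwhelms the drift. On the event $\{N_t \ge -\tfrac{\Delta}{4}(t-1)\}$ one has $D_t \ge \tfrac{\Delta^2}{4}(t-1)$ and therefore $p_t(2)\le e^{-\Delta^2(t-1)/4}$; on the complementary event I would use only $p_t(2)\le 1$ together with the Chernoff estimate $\P\bigl(N_t < -\tfrac{\Delta}{4}(t-1)\bigr) \le e^{-\Delta^2(t-1)/32}$, which follows from the sub-Gaussian increment bound applied to the martingale $N_t$. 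Summing the two resulting geometric series over $t$ gives $\sum_{t\ge 3}\E\, p_t(2) = O(1/\Delta^2)$, and a careful accounting of the constants (using $1-e^{-\alpha}\ge \alpha/2$ to control the series, and checking both the small- and large-$\Delta$ regimes) produces the explicit value $578/\Delta^2$.

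The main obstacle is precisely the correlation between the pull counts $T_1(t-1), T_2(t-1)$ and the reward sequences: because the times at which each arm is sampled are not stopping times for the individual reward streams, one cannot naively pass to $\E\, e^{-\Delta N_t}$ — doing so produces a factor $e^{\Delta^2(t-1)/2}$ that exactly cancels the drift and yields a vacuous bound. The resolution, and the crux of the argument, is to keep the truncation $p_t(2)\le\min\{1,e^{-D_t}\}$ instead of taking the exponential moment, and to observe that $N_t$ is a martingale with sub-Gaussian increments summed over exactly $t-1$ terms \emph{regardless of how the pulls split between the two arms}; this is what allows the linear drift to dominate the $\sqrt{t}$ fluctuations and makes $\E\, p_t(2)$ summable uniformly in $n$.
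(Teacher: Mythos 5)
Your proposal is correct, and it takes a genuinely different route from the paper. The paper also starts from the identity $\log\frac{p_t(2)}{p_t(1)} = -\frac{(t-1)\Delta^2}{2} + \Delta\big(T_1(t-1)\widehat{\gamma}_{1,T_1(t-1)} + T_2(t-1)\widehat{\gamma}_{2,T_2(t-1)}\big)$, but then controls the two empirical deviations \emph{separately}, each evaluated at the random pull count $T_i(t-1)$: it splits on $\{\widehat{\gamma}_{2,T_2(t-1)}>\Delta/4\}$, then on $\{\widehat{\gamma}_{1,T_1(t-1)}>\Delta/4, T_1(t-1)>t/4\}$ (handled by Hoeffding's maximal inequality to absorb the random index), and finally must bound $\E\exp\big(\Delta\max_{1\le s\le t/4} s\widehat{\gamma}_{1,s}\big)$ by integrating tails of the maximal process; the constants $32+4+512+30=578$ come from these four pieces. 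Your observation that the whole fluctuation term $N_t$ is a single martingale with exactly $t-1$ conditionally centered sub-Gaussian increments, evaluated at a \emph{deterministic} time, collapses all of this into one application of Azuma--Hoeffding plus the truncation $p_t(2)\le\min\{1,e^{-D_t}\}$; you correctly identify that taking $\E e^{-\Delta N_t}$ directly is vacuous and that the truncation at level $\frac{\Delta}{4}(t-1)$ is the fix. Your route is shorter, avoids the maximal inequality and the exponential-moment computation entirely, and in fact yields a constant near $36/\Delta^2$ via $\sum_{u\ge1}e^{-au}\le 1/a$, so claiming $578$ is safe. What the paper's heavier decomposition buys is a template that survives when the two-point structure of $\Theta$ is lost: in the $K$-armed setting of Section \ref{sec:4} the ratio $p_t(i)/p_t(1)$ is a ratio of Gaussian integrals rather than the exponential of a single drift-plus-martingale, and the separate control of $\widehat{\gamma}_{1,\cdot}$ and $\widehat{\gamma}_{i,\cdot}$ is what carries over there, whereas your single-martingale cancellation is specific to the two-armed case. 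The only point you should make fully explicit in a written version is the standard filtration argument showing that the increment revealed at round $k$ (a fresh reward of whichever arm is pulled) is conditionally centered and sub-Gaussian given the past including $I_k$ --- the same fact the paper invokes in its footnote on Azuma--Hoeffding for sub-Gaussian martingale differences.
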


Note that we did not try to optimize the numerical constant in the above bound. Figure \ref{fig:comp} shows an empirical comparison of the policy of Figure \ref{fig:1} with Policy 1 of \cite{BPR13}. Note in particular that a regret bound of order $16/\Delta$ was proved for the latter algorithm and the (limited) numerical simulation presented here suggests that Thompson Sampling outperforms this strategy.

\begin{figure}
\begin{center}
\includegraphics[scale=0.5]{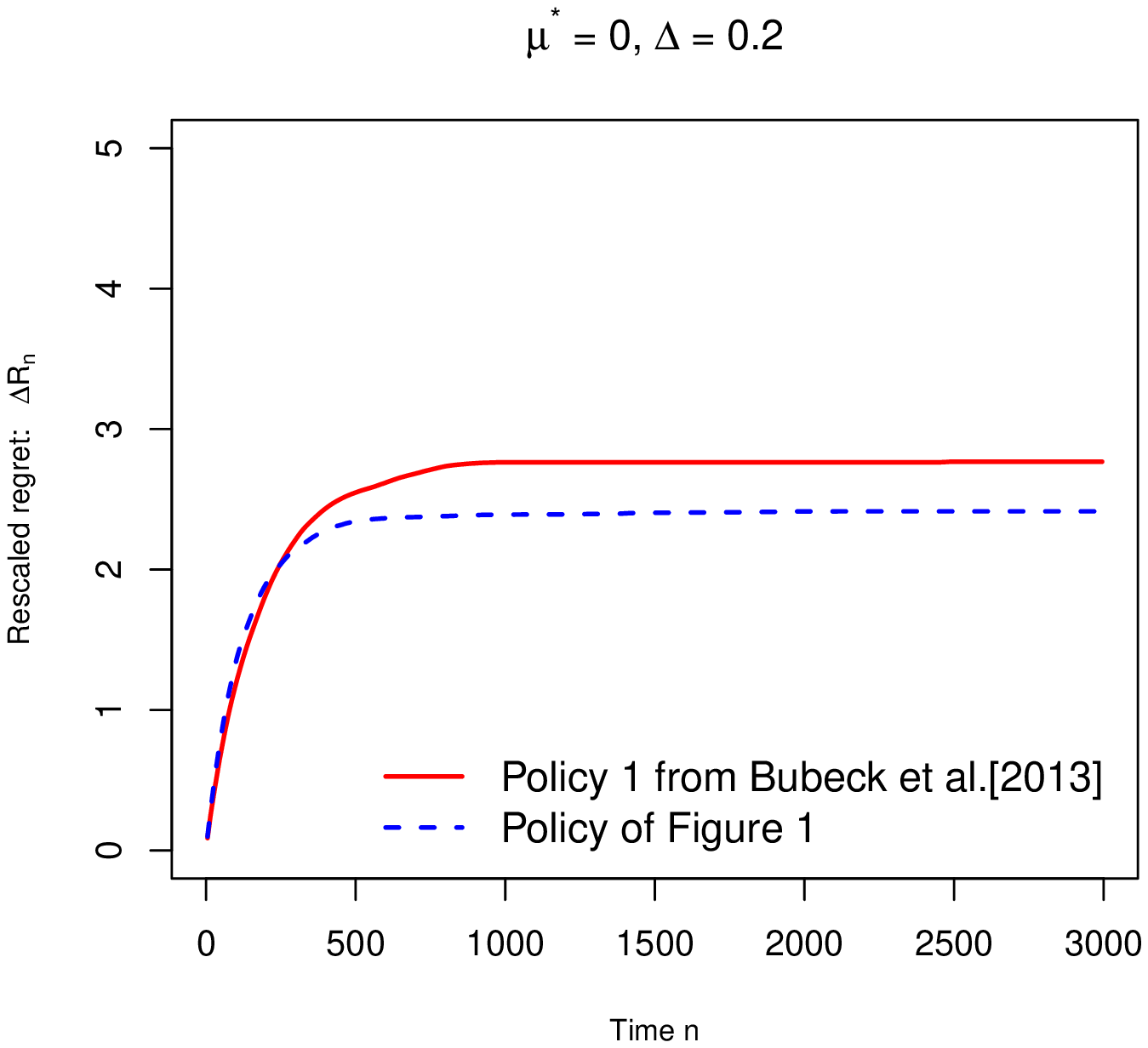} \hfill \includegraphics[scale=0.5]{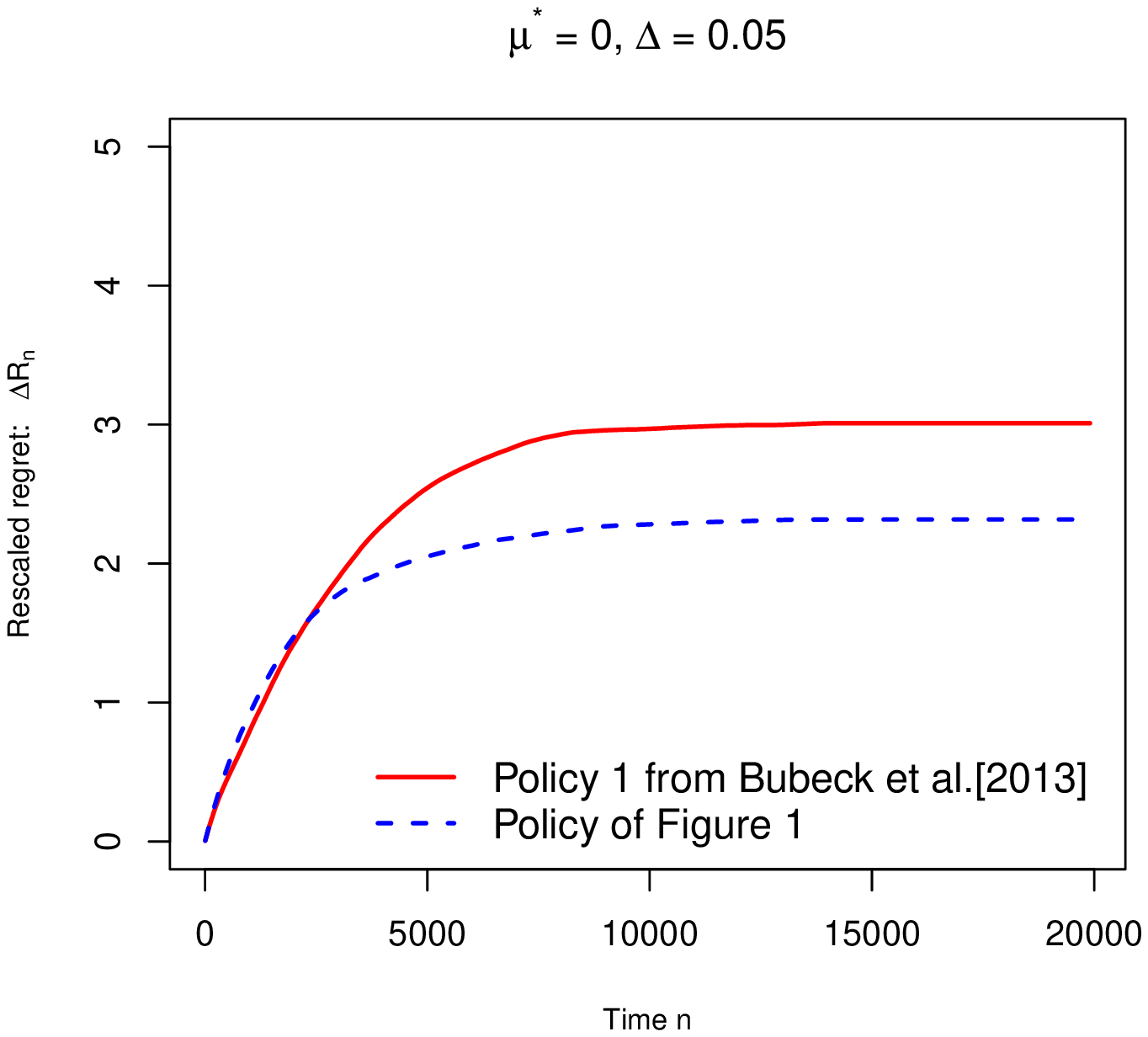}
\end{center}
\caption{\label{fig:comp} Empirical comparison of the policy of Figure \ref{fig:1} and Policy 1 of \cite{BPR13} on Gaussian reward distributions with variance $1$.}
\end{figure}

\begin{proof}  
Without loss of generality we assume that arm $1$ is the optimal arm, that is $\mu_1=\mu^*$ and $\mu_2=\mu^*-\Delta$. Let $\widehat{\mu}_{i,s} = \frac{1}{s} \sum_{t=1}^s X_{i,t}$, $\widehat{\gamma}_{1, s} = \mu_1 - \widehat{\mu}_{1, s}$ and $\widehat{\gamma}_{2, s} = \widehat{\mu}_{2, s}- \mu_2$. Note that large (positive) values of $\widehat{\gamma}_{1, s}$ or $\widehat{\gamma}_{2, s}$ might mislead the algorithm into bad decisions, and we will need to control what happens in various regimes for these $\gamma$ coefficients. We decompose the proof into three steps.  
\newline

\noindent
\textbf{Step 1.} 
This first step will be useful in the rest of the analysis, it shows how the probability ratio of a bad pull over a good pull evolves as a function of the $\gamma$ coefficients introduced above. One has:
\begin{eqnarray*}
\frac{p_t(2)}{p_t(1)} & = & \exp\left( -\frac{1}{2}\sum_{s=1}^{T_1(t-1)} \bigg[ (\mu_2 - X_{1,s})^2-(\mu_1 - X_{1,s})^2 \bigg] -\frac{1}{2}\sum_{s=1}^{T_2(t-1)} \bigg[ (\mu_1 - X_{2,s})^2-(\mu_2 - X_{2,s})^2 \bigg] \right)   \\
& = &\exp\left( -\frac{T_1(t-1)}{2}  \bigg[ \mu_2^2-\mu_1^2 - 2 (\mu_2 - \mu_1) \widehat{\mu}_{1,T_1(t-1)} \bigg] -\frac{T_2(t-1)}{2}  \bigg[ \mu_1^2-\mu_2^2 - 2 (\mu_1 - \mu_2) \widehat{\mu}_{2,T_2(t-1)} \bigg]  \right)  \\
& = &\exp\left(- \frac{T_1(t-1) }{2} \bigg[ \Delta^2 - 2 \Delta(\mu_1 - \widehat{\mu}_{1,T_1(t-1)}) \bigg] -\frac{T_2(t-1)}{2}  \bigg[ \Delta^2 - 2 \Delta(\widehat{\mu}_{2,T_2(t-1)} - \mu_2)\bigg]  \right) \\
& = & \exp\left(-\frac{t\Delta^2}{2} + T_1(t-1)\Delta \widehat{\gamma}_{1, T_{1}(t-1)} +  T_{2}(t-1) \Delta \widehat{\gamma}_{2, T_{2}(t-1)} \right) .
\end{eqnarray*}

\textbf{Step 2.}  We decompose the regret $R_n$ as follows:
\begin{eqnarray*}
 \frac{R_n}{\Delta} & = & 1 + \E \sum_{t=3}^n \ds1 \{I_t=2\}  \\
     & = & 1+ \E \sum_{t=3}^n \ds1 \left\{  \widehat{\gamma}_{2, T_{2}(t-1)} > \frac{\Delta}{4}, I_t=2 \right\}
        + \E \sum_{t=3}^n \ds1 \left\{ \widehat{\gamma}_{2, T_{2}(t-1)} \leq \frac{\Delta}{4}, \widehat{\gamma}_{1, T_{1}(t-1)} \leq \frac{\Delta}{4},  I_t=2 \right\} \\
     &  &   + \E \sum_{t=3}^n \ds1 \left\{ \widehat{\gamma}_{2, T_{2}(t-1)} \leq \frac{\Delta}{4}, \widehat{\gamma}_{1, T_{1}(t-1)} > \frac{\Delta}{4},  I_t=2 \right\} . \\
\end{eqnarray*} 
We use Hoeffding's inequality to control the first term:
$$ \E \sum_{t=3}^n \ds1 \left\{  \widehat{\gamma}_{2, T_{2}(t-1)} > \frac{\Delta}{4}, I_t=2 \right \} 
  \leq \E \sum_{s=1}^n \ds1 \left\{  \widehat{\gamma}_{2, s} > \frac{\Delta}{4} \right\} 
  \leq \sum_{s=1}^n \exp \left(-\frac{s\Delta^2}{32} \right) \leq \frac{32}{\Delta^2}. $$
For the second term, using the rewriting of Step 1 as an upper bound on $p_t(2)$, one obtains:
\begin{eqnarray*}
  \E \sum_{t=3}^n \ds1 \left\{ \widehat{\gamma}_{2, T_{2}(t-1)} \leq \frac{\Delta}{4}, \widehat{\gamma}_{1, T_{1}(t-1)} \leq \frac{\Delta}{4},  I_t=2 \right\} & = & \sum_{t=3}^n \E \left( p_t(2) \ds1 \left\{ \widehat{\gamma}_{2, T_{2}(t-1)} \leq \frac{\Delta}{4}, \widehat{\gamma}_{1, T_{1}(t-1)} \leq \frac{\Delta}{4} \right\} \right) \\
& \leq & \sum_{t=3}^n \exp\left(-\frac{t\Delta^2}{4} \right) \leq \frac{4}{\Delta^2}.
\end{eqnarray*}
The third term is more difficult to control, and we further decompose the corresponding event as follows:
\begin{align*}
& \left\{ \widehat{\gamma}_{2, T_{2}(t-1)} \leq \frac{\Delta}{4}, \widehat{\gamma}_{1, T_{1}(t-1)} > \frac{\Delta}{4},  I_t=2 \right\} \\
& \subset \left\{ \widehat{\gamma}_{1, T_{1}(t-1)} > \frac{\Delta}{4} , T_1(t-1) > t/4\right\} \cup \left\{ \widehat{\gamma}_{2, T_{2}(t-1)} \leq \frac{\Delta}{4},  I_t=2, T_1(t-1) \leq t/4 \right\} .
\end{align*}
The cumulative probability of the first event in the above decomposition is easy to control thanks to Hoeffding's maximal inequality\footnote{It is an easy exercise to verify that Azuma-Hoeffding holds for martingale differences with sub-Gaussian increments, which implies Hoeffding's maximal inequality for sub-Gaussian distributions.} which states that for any $m \geq 1$ and $x>0$ one has
$$\P(\exists \ 1 \leq s \leq m \ \text{s.t.} \ s \ \widehat{\gamma}_{1, s} \geq x) \leq \exp \left(-\frac{x^2}{2m}\right).$$ 
Indeed this implies
$$\P\left( \widehat{\gamma}_{1, T_{1}(t-1)} > \frac{\Delta}{4} , T_1(t-1) > t/4 \right) \leq \P \left( \exists \ 1 \leq s \leq t \ \text{s.t.} \ s\ \widehat{\gamma}_{1, s} > \frac{\Delta t}{16} \right) \leq \exp\left(-\frac{t \Delta^2}{512} \right) ,$$
and thus
$$\E \sum_{t=3}^n \ds1 \left\{ \widehat{\gamma}_{1, T_{1}(t-1)} > \frac{\Delta}{4} , T_1(t-1) > t/4\right\} \leq \frac{512}{\Delta^2} .$$
It only remains to control the term
\begin{eqnarray}
\E \sum_{t=3}^n \ds1 \left\{ \widehat{\gamma}_{2, T_{2}(t-1)} \leq \frac{\Delta}{4},  I_t=2, T_1(t-1) \leq t/4 \right\} 
& = & \sum_{t=3}^n \E \left( p_t(2) \ds1 \left\{ \widehat{\gamma}_{2, T_{2}(t-1)} \leq \frac{\Delta}{4}, T_1(t-1) \leq t/4 \right\} \right) \notag \\
& \leq & \sum_{t=3}^n \E \exp \left( - \frac{t \Delta^2}{4} + \Delta \max_{1 \leq s \leq t/4} s \widehat{\gamma}_{1, s} \right) , \notag
\end{eqnarray}
where the last inequality follows from Step 1. The last step is devoted to bounding from above this last term.
\newline

\noindent
\textbf{Step 3.}
By integrating the deviations and using again Hoeffding's maximal inequality one obtains
$$
   \E \exp \left( \Delta \max_{1 \leq s \leq t/4} s \widehat{\gamma}_{1, s} \right)
   \leq  1+ \int_1^{+\infty} \P\left(\max_{1 \leq s \leq \frac{t}{4}} s\widehat{\gamma}_{1, s} \geq \frac{\log x}{\Delta} \right) \; dx 
   \leq 1+ \int_1^{+\infty} \exp\left(-\frac{2 (\log x)^2}{\Delta^2 t}\right) \; dx . $$
Now, straightforward computation gives
\begin{eqnarray*}
  \sum_{t=3}^{n} \exp\left(-\frac{t\Delta^2}{4}\right)  \left( 1+ \int_1^{+\infty} \exp\left(-\frac{2 (\log x)^2}{\Delta^2 t}\right) \; dx \right)
 & \leq & \sum_{t=3}^{n} \exp\left(-\frac{t\Delta^2}{4}\right) \left( 1+ \sqrt{\frac{\pi \Delta^2 t}{2}} \exp\left(\frac{t\Delta^2}{8}\right) \right)\\
 & \leq & \frac{4}{\Delta^2} + \int_{0}^{+\infty} \sqrt{\frac{\pi \Delta^2 t}{2}} \exp\left(-\frac{t\Delta^2}{8}\right) \; dt \\
 & \leq & \frac{4}{\Delta^2} + \frac{16\sqrt{\pi}}{\Delta^{2}} \int_{0}^{+\infty}  \sqrt{u}\exp(- u) \; du \\
 & \leq & \frac{30}{\Delta^2}. \\
\end{eqnarray*}
which concludes the proof by putting this together with the results of the previous step.
\newline
\end{proof}

\section{Optimal strategy for the BPR setting inspired by Thompson Sampling}
\label{sec:4}
In this section we consider the general BPR setting. That is the reward distributions are sub-Gaussian (they satisfy $\E e^{\lambda(X-\mu)} \le e^{\lambda^2/2}$ for all $\lambda \in \R$), one reward distribution has mean $\mu^*$, and all the other means are smaller than $\mu^* - \epsilon$ where $\mu^*$ and $\epsilon$ are known values. 

Similarly to the previous section we assume that the reward distributions are Gaussian with variance $1$ for the derivation of the Thompson Sampling strategy (but we do not make this assumption for the analysis of the resulting algorithm). Then the set of possible parameters is described as follows:
$$\Theta = \cup_{i=1}^K \Theta_i \ \text{where} \ \Theta_i = \{\theta \in \R^K \ \text{s.t.} \ \theta_i = \mu^* \ \text{and} \ \theta_j \leq \mu^* - \epsilon \ \text{for all} \ j \neq i\} .$$
Assuming a uniform prior over the index of the best arm, and a prior $\lambda$ over the mean of a suboptimal arm one obtains by Bayes rule that the probability density function of the posterior is given by:
$$d\pi_t(\theta) \propto \exp\left(- \frac{1}{2} \sum_{j=1}^K \sum_{s=1}^{T_j(t-1)} (X_{j,s} - \theta_j)^2 \right) \prod_{j=1, j \neq i^*(\theta)}^K d\lambda(\theta_j) .$$
Now remark that with Thompson Sampling arm $i$ is played at time $t$ if and only if $\theta^{(t)} \in \Theta_i$. In other words $I_t$ is played at random from probability $p_t$ where
\begin{eqnarray*} 
p_t(i) = \pi_t(\Theta_i) & \propto & \exp\left(- \frac{1}{2} \sum_{s=1}^{T_i(t-1)} (X_{i,s} - \mu^*)^2 \right) \prod_{j \neq i} \left[ \int_{-\infty}^{\mu^* - \epsilon}  \exp\left(- \frac{1}{2} \sum_{s=1}^{T_j(t-1)} (X_{j,s} - v)^2 \right) d\lambda(v) \right] \\
& \propto & \frac{\exp\left( - \frac{1}{2} \sum_{s=1}^{T_i(t-1)} (X_{i,s} - \mu^*)^2 \right)}{\int_{-\infty}^{\mu^*-\epsilon} \exp\left( - \frac{1}{2} \sum_{s=1}^{T_i(t-1)} (X_{i,s} - v)^2 \right) d\lambda(v)} . 
\end{eqnarray*}
Taking inspiration from the above calculation we consider the following policy, where $\lambda$ is the Lebesgue measure and we assume a slightly larger value for the variance (this is necessary for the proof).

\begin{figure}[h!]
\bookbox{
For rounds $t \in [K]$, select arm $I_t=t$.

For each round $t=K+1, K+2,\ldots$ play $I_t$ at random from $p_t$ where
$$ p_t(i) = c \frac{\exp\left( - \frac{1}{3} \sum_{s=1}^{T_i(t-1)} (X_{i,s} - \mu^*)^2 \right)}{\int_{-\infty}^{\mu^*-\epsilon} \exp\left( - \frac{1}{3} \sum_{s=1}^{T_i(t-1)} (X_{i,s} - v)^2 \right) dv} \ , $$
and $c >0$ is such that $\sum_{i=1}^K p_t(i) = 1$.
}

\caption{\label{fig:2}
Policy inspired by Thompson Sampling for the BPR setting.}
\end{figure}

The following theorem shows that this policy attains the best known performance for the BPR setting, shaving off a log-log term in the regret bound of the BPR policy.

\begin{theorem} \label{th:alg1}
The policy of Figure \ref{fig:2} has regret bounded as $R_n \leq \sum_{i : \Delta_i > 0} \left( \Delta_i + \frac{80 + \log(\Delta_i / \epsilon)}{\Delta_i} \right)$, uniformly in $n$.
\end{theorem}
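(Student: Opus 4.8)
The plan is to follow the three-step template of the two-armed proof, now applied to the expected number of pulls of each suboptimal arm. Writing $R_n = \sum_{i : \Delta_i > 0} \Delta_i \, \E T_i(n)$ and using that $\E(\ds1\{I_t = i\} \mid H_t) = p_t(i)$, one has $\E T_i(n) = 1 + \sum_{t = K+1}^n \E p_t(i)$, so it suffices to prove $\sum_{t=K+1}^n \E p_t(i) \le (80 + \log(\Delta_i/\epsilon))/\Delta_i^2$ for each fixed suboptimal arm $i$. Throughout I denote by $i^*$ the optimal arm, and set $\widehat{\gamma}_{i,s} = \widehat{\mu}_{i,s} - \mu_i$ (the harmful overestimate of the suboptimal arm) and $\widehat{\gamma}_{i^*,s} = \mu^* - \widehat{\mu}_{i^*,s}$ (the harmful underestimate of the optimal arm), exactly as in Section \ref{sec:3}.

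First I would derive the analogue of Step 1: a pointwise bound on $p_t(i)$. Completing the square $\sum_s (X_{i,s}-v)^2 = \sum_s (X_{i,s}-\widehat{\mu}_{i,s})^2 + T_i(t-1)(\widehat{\mu}_{i} - v)^2$ in both numerator and denominator of $p_t(i)$ cancels the empirical-variance factor, and the remaining denominator integral is a truncated Gaussian computable through the Gaussian c.d.f. Bounding $p_t(i) \le (N_i/D_i)/(N_{i^*}/D_{i^*})$, where $N_j,D_j$ denote the numerator and denominator for arm $j$, I would establish two estimates: when $\widehat{\mu}_i \le \mu^* - \epsilon$ the peak of the integrand lies inside the integration region $(-\infty,\mu^*-\epsilon]$, giving the upper bound $N_i/D_i \le 2\sqrt{T_i(t-1)/(3\pi)}\,\exp(-\tfrac{1}{3}T_i(t-1)(\Delta_i - \widehat{\gamma}_{i})^2)$ (with a Gaussian-tail correction when $\widehat{\mu}_i$ exceeds $\mu^*-\epsilon$); and when $\widehat{\gamma}_{i^*} \le \epsilon/2$ a matching tail estimate gives the lower bound $N_{i^*}/D_{i^*} \ge \tfrac{1}{3}T_{i^*}(t-1)\,\epsilon$. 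These play the role that the explicit ratio $p_t(2)/p_t(1)$ played in the two-armed argument.

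Next, mirroring Step 2, I would decompose $\sum_{t} \E p_t(i)$ according to the deviation regimes. The event $\{\widehat{\gamma}_{i,T_i(t-1)} > \Delta_i/2\}$ is controlled directly by Hoeffding's inequality summed over the pull index $s$, contributing $O(1/\Delta_i^2)$. The event that the optimal arm is badly underestimated, $\{\widehat{\gamma}_{i^*,T_{i^*}(t-1)} > \epsilon/2\}$, is controlled by Hoeffding's maximal inequality as before; the genuinely new difficulty is that, unlike the two-armed setting where $T_1 + T_2 = t-1$ forces the optimal arm to be well sampled whenever a suboptimal one is not, here the optimal arm may be under-sampled because of the other suboptimal arms, so this step requires splitting on $T_{i^*}(t-1)$ and arguing that rounds with a small optimal count are rare.

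Finally, in the good regime (both $\widehat{\gamma}$'s small) I would plug the Step 1 estimates into $p_t(i) \le (N_i/D_i)/(N_{i^*}/D_{i^*})$ and integrate the deviations of $\widehat{\gamma}_i$, as in Step 3. I expect this main term to be the principal obstacle: the bound carries a factor $\sqrt{T_i(t-1)}/(T_{i^*}(t-1)\,\epsilon)$, and summing over $t$ requires tracking the coupling between $T_i$ and $T_{i^*}$; it is precisely from the interplay between the gap $\Delta_i$ and the known margin $\epsilon$ — the width of the forbidden region $(\mu^* - \epsilon, \mu^*]$ that the denominator integral never sees — that the new logarithmic term $\log(\Delta_i/\epsilon)$ emerges, a feature absent from the two-armed bound. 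Collecting the $O(1/\Delta_i^2)$ contributions of the easy events with this main term, and summing the resulting per-arm bounds against $\Delta_i$, yields the claimed inequality.
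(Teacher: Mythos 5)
Your skeleton matches the paper's up to a point: reduce to bounding $\sum_t \E\, p_t(i)$ for each suboptimal arm, rewrite the ratio of posterior masses as a ratio of truncated Gaussian integrals in the $\widehat{\gamma}$ variables, dispose of the event $\{\widehat{\gamma}_{i,T_i(t-1)} > c\Delta_i\}$ by Hoeffding over the pull index, and your diagnosis that the $\log(\Delta_i/\epsilon)$ term comes from the interplay between $\Delta_i$ and the margin $\epsilon$ is correct. But the step you yourself flag as ``the principal obstacle'' --- summing, over rounds $t$, a bound that carries a factor like $1/(T_{i^*}(t-1)\,\epsilon)$, and controlling rounds where the optimal arm is under-sampled --- is left unresolved, and the fix you sketch (``splitting on $T_{i^*}(t-1)$ and arguing that rounds with a small optimal count are rare'') does not obviously go through: rarity of rounds with small $T_{i^*}(t-1)$ is essentially equivalent to the regret bound you are trying to prove, so this route is circular unless you set up a careful bootstrap. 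The paper sidesteps the issue with an identity you are missing: since $E_t=\{\widehat{\gamma}_{i,T_i(t-1)}\le \Delta_i/4,\ T_i(t-1)\ge A_i\}$ is $H_t$-measurable, $\E[\ds1\{E_t\}\ds1\{I_t=i\}]=\E[p_t(i)\ds1\{E_t\}]=\E[(p_t(i)/p_t(1))\,\ds1\{E_t\}\,\ds1\{I_t=1\}]$. The extra indicator $\ds1\{I_t=1\}$ is the whole point: on that event $T_1(t-1)$ takes each value at most once as $t$ varies, so after bounding the arm-$i$ denominator deterministically the sum over rounds re-indexes into $\sum_{s=1}^{\infty}\E\bigl[\int_{-\widehat{\gamma}_{1,s}+\epsilon}^{+\infty}e^{-\frac{s}{3}(v^2-\widehat{\gamma}_{1,s}^2)}\,dv\bigr]=O(1/\epsilon)$, with no reference to how often arm $1$ is actually pulled and with plain (non-maximal) Hoeffding sufficing.

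A second, related omission: the paper extracts the $\log(\Delta_i/\epsilon)/\Delta_i$ term by giving away the first $A_i=\lceil\frac{6}{\Delta_i^2}\log(e^6\Delta_i/\epsilon)\rceil$ pulls of arm $i$ and running the posterior-ratio argument only on $\{T_i(t-1)\ge A_i\}$; this threshold is exactly what makes the lower bound on the arm-$i$ denominator integral, of order $e^{-T_i(t-1)\Delta_i^2/6}\cdot\frac{2}{\Delta_i T_i(t-1)}$, small enough (of order $\frac{\epsilon}{\Delta_i^2}\log(e^6\Delta_i/\epsilon)$ after inverting) to cancel the $1/\epsilon$ coming from the arm-$1$ numerator sum. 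Your ``good regime'' bound with the factor $\sqrt{T_i(t-1)}/(T_{i^*}(t-1)\,\epsilon)$ has no such mechanism and will not sum to the stated constant. So the proposal identifies the right difficulties but lacks the two ideas --- the change of measure onto $\{I_t=1\}$ with re-indexing by pull count, and the pull-count threshold $A_i$ --- that the paper uses to overcome them.
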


\begin{proof}
The general structure of the proof is superficially similar to the proof of Theorem 2 but many details are different. Without loss of generality we assume that arm $1$ is the optimal arm, that is $\mu_1=\mu^*$ and $\forall i \geq 2, \mu_i=\mu^*-\Delta_i$. Let $\widehat{\gamma}_{1, s} = \mu_1 - \widehat{\mu}_{1, s} $ and $\widehat{\gamma}_{i, s} = \widehat{\mu}_{i, s}- \mu_i$ for $i \geq 2$. 
We decompose the proof into four steps.  
\newline

\noindent
\textbf{Step 1: Rewriting of the ratio $\frac{p_{i,t}}{p_{1,t}}$.} Let $i \geq 2$, the following rewriting will be useful in the rest of the proof:
\begin{eqnarray*}
\frac{p_{t}(i)}{p_{t}(1)} & = &  \frac{\int_{-\infty}^{\mu_1-\epsilon}   \exp \left( -\frac{1}{3}\sum_{s=1}^{T_1(t-1)} (X_{1,s}-v)^2-(X_{1,s}-\mu_1)^2 \right)  \; dv }{\int_{-\infty}^{\mu_1-\epsilon}   \exp\left( -\frac{1}{3}\sum_{s=1}^{T_i(t-1)} (X_{i,s}-v)^2-(X_{i,s}-\mu_1)^2 \right) \; dv  } \\
  & = &  \frac{\int_{-\infty}^{\mu_1-\epsilon}    \exp \left(  -\frac{T_1(t-1)}{3}(\widehat{\mu}_{1, T_{1}(t-1)}-v)^2-(\widehat{\mu}_{1, T_{1}(t-1)}-\mu_1)^2 \right)   \; dv }
{\int_{-\infty}^{\mu_1-\epsilon}  \exp \left(  -\frac{T_i(t-1)}{3} (\widehat{\mu}_{i, T_{i}(t-1)}-v)^2-(\widehat{\mu}_{i, T_{i}(t-1)}-\mu_1)^2 \right)  \; dv  } \\
  & = & \frac{\int_{-\widehat{\gamma}_{1, T_1(t-1)}+\epsilon}^{+\infty} \exp \left( -\frac{T_1(t-1)}{3}(v^2-\widehat{\gamma}_{1, T_1(t-1)}^2 ) \right) \; dv}{\int_{\widehat{\gamma}_{i, T_i(t-1)}-\Delta_i+\epsilon}^{+\infty} \exp \left( -\frac{T_i(t-1)}{3}(v^2-(\widehat{\gamma}_{i, T_i(t-1)}-\Delta_i)^2 ) \right) \; dv } ,\\
\end{eqnarray*}
where the last step follows by a simple change of variable.
\newline

\noindent
\textbf{Step 2: Decomposition of $R_n$.} For $i \geq 2$. Let $A_i=\lceil \frac{6}{\Delta_i^2}\log(\frac{e^6\Delta_i}{\epsilon}) \rceil$ where $\lceil x \rceil$ is the smallest integer larger than $x$. We decompose the regret $R_n$ as follows.
\begin{eqnarray*}
 R_n & = & \sum_{i=2}^{K} \left( \Delta_i + \Delta_i \E \sum_{t=K+1}^n \ds1 \{I_t=i\}  \right) \\ 
     & \leq & \sum_{i=2}^{K} \Delta_i \left( A_i +  \E \sum_{t=K+1}^n \ds1 \{T_{i}(t-1) \geq A_i, I_t=i\} \right) \\
     & = & \sum_{i=2}^{K} \Delta_i  \left(  A_i +  \E \sum_{t=K+1}^n \ds1 \left\{ \widehat{\gamma}_{i, T_{i}(t-1)} > \frac{\Delta_i}{4}, T_{i}(t-1) \geq A_i, I_t=i\right\}  \right. \\
&&
    \qquad \left.  + \ \E \sum_{t=K+1}^n \ds1 \left\{ \widehat{\gamma}_{i, T_{i}(t-1)} \leq \frac{\Delta_i}{4}, T_{i}(t-1) \geq A_i, I_t=i \right\} \right) .\\
\end{eqnarray*} 
The first expectation can be bounded by using Hoeffding's inequality.
$$ \E \sum_{t=K+1}^n \ds1 \left\{ \widehat{\gamma}_{i, T_{i}(t-1)} > \frac{\Delta_i}{4}, T_{i}(t-1) \geq A_i, I_t=i \right\} 
  \leq \E \sum_{s=1}^n \ds1 \left\{  \widehat{\gamma}_{i, s} > \frac{\Delta_i}{4} \right\} 
  \leq \sum_{s=1}^n \exp\left(-\frac{s\Delta_i^2}{32}\right) \leq \frac{32}{\Delta_i^2}. $$
The second expectation is more difficult to bound from above and the next two steps are dedicated to this task.
\newline

\noindent
\textbf{Step 3: Analysis of $ \sum_{t=K+1}^n  \E \; \ds1 \left\{ \widehat{\gamma}_{i, T_{i}(t-1)} \leq \frac{\Delta_i}{4}, T_{i}(t-1) \geq A_i, I_t=i \right\} . $}  Clearly by definition of the policy one has
$$ \sum_{t=K+1}^n  \E \; \ds1 \left\{ \widehat{\gamma}_{i, T_{i}(t-1)} \leq \frac{\Delta_i}{4}, T_{i}(t-1) \geq A_i, I_t=i \right\} 
= \sum_{t=K+1}^n \E \left[  \frac{p_t(i)}{p_t(1)}  \ds1 \left\{ \widehat{\gamma}_{i, T_{i}(t-1)} \leq \frac{\Delta_i}{4}, T_{i}(t-1) \geq A_i , I_t = 1 \right\} \right] . $$
We have now to control the term $ \frac{p_t(i)}{p_t(1)}$ on the event $\{ \widehat{\gamma}_{i, T_{i}(t-1)} \leq \frac{\Delta_i}{4}, T_{i}(t-1) \geq A_i \}$. The following bounds on the tail of the standard Gaussian distribution will be useful, for any $x>0$ one has
$$ \frac{1}{x} e^{-\frac{1}{2}x^2} \geq \int_x^{+\infty} e^{-\frac{1}{2}v^2} \; dv  \geq  \frac{1}{x}\left(1-\frac{1}{x^2}\right) e^{-\frac{1}{2}x^2}. $$
Now one has
\begin{align*}
     &  \int_{\widehat{\gamma}_{i, T_i(t-1)}-\Delta_i+\epsilon}^{+\infty} e^{-\frac{1}{3}T_i(t-1)(v^2-(\widehat{\gamma}_{i, T_i(t-1)}-\Delta_i)^2 ) } \; dv  \\
     & =  e^{\frac{1}{3}T_i(t-1)(\widehat{\gamma}_{i, T_i(t-1)}-\Delta_i)^2 }  \int_{\widehat{\gamma}_{i, T_i(t-1)}-\Delta_i+\epsilon}^{+\infty}   e^{-\frac{1}{3}T_i(t-1)v^2 } \; dv  \\
     & \geq  e^{\frac{3}{16}T_i(t-1)\Delta_i^2 }  \int_{\frac{\Delta_i}{4}}^{+\infty}  e^{-\frac{1}{3}T_i(t-1)v^2 } \; dv  \\
     & =  e^{\frac{3}{16}T_i(t-1)\Delta_i^2 } \cdot \sqrt{\frac{3}{2T_i(t-1)}} \cdot  \int_{\frac{\Delta_i}{4}\sqrt{\frac{2T_i(t-1)}{3}} }^{+\infty}  e^{-\frac{1}{2}v^2 } \; dv  \\
     & \geq  e^{\frac{3}{16}T_i(t-1)\Delta_i^2 } \cdot \frac{6}{\Delta_i T_i(t-1)}\left(1-\frac{24}{\Delta_i^2 T_i(t-1)}\right) e^{-\frac{1}{48} T_i(t-1) \Delta_i^2 }  \\
     & \geq  e^{\frac{1}{6}T_i(t-1)\Delta_i^2 } \cdot \frac{2}{\Delta_i T_i(t-1)}, \\
\end{align*} 
where the last step follows from 
$$T_i(t-1) \geq A_i \geq \frac{6}{\Delta_i^2}\log\left(\frac{e^6\Delta_i}{\epsilon}\right) \geq \frac{36}{\Delta_i^2}.$$
Next, using the fact that the function $x \rightarrow \frac{1}{x}e^{\frac{1}{6}x\Delta_i^2}$ is increasing on $[ \frac{6}{\Delta_i^2}, +\infty)$, we get
\begin{eqnarray*}
\left( \int_{\widehat{\gamma}_{i, T_i(t-1)}-\Delta_i+\epsilon}^{+\infty} e^{-\frac{1}{3}T_i(t-1)(v^2-(\widehat{\gamma}_{i, T_i(t-1)}-\Delta)^2 ) } \; dv  \right)^{-1}
     & \leq & \left( e^{\frac{1}{6}T_i(t-1)\Delta^2 } \cdot \frac{2}{\Delta_i T_i(t-1)} \right)^{-1}\\
     & \leq &  e^{-\frac{1}{6}\Delta_i^2 \left(\frac{6}{\Delta_i^2}\log\left(\frac{e^6\Delta_i}{\epsilon}\right) \right)}  \frac{\Delta_i}{2} \frac{6}{\Delta_i^2}\log\left(\frac{e^6\Delta}{\epsilon}\right)\\
     & = & \frac{3}{e^6} \frac{\epsilon}{\Delta_i^2}\log\left(\frac{e^6\Delta_i}{\epsilon}\right) . \\
\end{eqnarray*}
Plugging into the expression of $ \frac{p_{t}(i)}{p_{t}(1)} $, we obtain 
\begin{eqnarray*}
     & & \sum_{t=K+1}^n  \E \; \ds1_{\{ \widehat{\gamma}_{i, T_i(t-1)} \leq \frac{\Delta_i}{4}, T_i(t-1) \geq A_i, I_t=i\}}   \\
     & = & \sum_{t=K+1}^n \E \left[  \frac{p_{i,t}}{p_{1,t}}  \ds1_{\{ \widehat{\gamma}_{i, T_i(t-1)} \leq \frac{\Delta_i}{4}, T_i(t-1) \geq A_i , I_t = 1\}} \right]   \\
     & \leq & \left( \sum_{t=K+1}^n \E \left[  \int_{-\widehat{\gamma}_{1, T_1(t-1)}+\epsilon}^{+\infty} e^{-\frac{1}{3}T_1(t-1)(v^2-\widehat{\gamma}_{1, T_1(t-1)}^2 ) } \; dv  \ds1_{\{ I_t = 1\}} \right]  \right)\frac{3}{e^6} \frac{\epsilon}{\Delta_i^2}\log\left(\frac{e^6\Delta_i}{\epsilon}\right) .  \\
     & \leq & \left( \sum_{t=1}^{+\infty} \E \left[  \int_{-\widehat{\gamma}_{1, t}+\epsilon}^{+\infty} e^{-\frac{1}{3}t(v^2-\widehat{\gamma}_{1, t}^2 ) } \; dv  \right]  \right) \frac{3}{e^6} \frac{\epsilon}{\Delta_i^2}\log\left(\frac{e^6\Delta_i}{\epsilon}\right) .  \\
\end{eqnarray*} 
\newline

\noindent
\textbf{Step 4: Control of $ \sum_{t=1}^{+\infty} \E \left[  \int_{-\widehat{\gamma}_{1, t}+\epsilon}^{+\infty} e^{-\frac{1}{3}t(v^2-\widehat{\gamma}_{1, t}^2 ) } \; dv  \right]  .$} \\
First, observe that
\begin{eqnarray*}
   & & \sum_{t=1}^{+\infty} \E \left[  \int_{-\widehat{\gamma}_{1, t}+\epsilon}^{+\infty} e^{-\frac{1}{3}t(v^2-\widehat{\gamma}_{1, t}^2 ) } \; dv  \right]  \\
   & \leq & \sum_{t=1}^{+\infty} \E \left[  \int_{-\widehat{\gamma}_{1, t}+\epsilon}^{+\infty} e^{-\frac{1}{3}t v^2} \; dv  \cdot e^{\frac{1}{3}t\widehat{\gamma}_{1, t}^2 }  \ds1_{\{ |\widehat{\gamma}_{1, t}| \leq \frac{\epsilon}{3} \}}  \right]  
      +  \sum_{t=1}^{+\infty} \E \left[  \int_{-\widehat{\gamma}_{1, t}+\epsilon}^{+\infty} e^{-\frac{1}{3}t v^2} \; dv  \cdot e^{\frac{1}{3}t\widehat{\gamma}_{1, t}^2 }  \ds1_{\{ |\widehat{\gamma}_{1, t}| \geq \frac{\epsilon}{3} \}}  \right]  \\
    & \leq & \sum_{t=1}^{+\infty}  \int_{\frac{2}{3}\epsilon}^{+\infty} e^{-\frac{1}{3}t v^2} \; dv  \cdot e^{\frac{1}{27}t\epsilon^2 }  
      +  \sum_{t=1}^{+\infty} \int_{-\infty}^{+\infty} e^{-\frac{1}{3}t v^2} \; dv  \cdot \E \left[  e^{\frac{1}{3}t\widehat{\gamma}_{1, t}^2 }  \ds1_{\{ |\widehat{\gamma}_{1, t}| \geq \frac{\epsilon}{3} \}}  \right] . \\
\end{eqnarray*}
The first term is straightforward to compute:
\begin{eqnarray*}
   \sum_{t=1}^{+\infty}  \int_{\frac{2}{3}\epsilon}^{+\infty} e^{-\frac{1}{3}t v^2} \; dv  \cdot e^{\frac{1}{27}t\epsilon^2 }   
   & =  &  \int_{\frac{2}{3}\epsilon}^{+\infty} \sum_{t=1}^{+\infty}  e^{-\frac{1}{3}t (v^2-\frac{1}{9}\epsilon^2 )}  \; dv    \\
   & \leq  &  \int_{\frac{2}{3}\epsilon}^{+\infty} \frac{3}{v^2-\frac{1}{9}\epsilon^2}  \; dv  \leq \frac{9\log 3}{2\epsilon}  \\       
\end{eqnarray*}
For the second term, we first integrate the deviations and we use Hoeffding's inequality to obtain
\begin{eqnarray*}
 \E \left[  e^{\frac{1}{3}t\widehat{\gamma}_{1, t}^2 }  \ds1_{\{ |\widehat{\gamma}_{1, t}| \geq \frac{\epsilon}{3} \}}  \right] 
   & \leq  &  e^{\frac{1}{3}t(\frac{\epsilon}{3})^2 } \P( |\widehat{\gamma}_{1, t}| \geq \frac{\epsilon}{3} ) +  \int_{e^{\frac{1}{3}t(\frac{\epsilon}{3})^2 }}^{+\infty} \P(e^{\frac{1}{3}t\widehat{\gamma}_{1, t}^2 } \geq x) \; dx     \\
   & \leq  & 2 e^{-\frac{1}{54}t\epsilon^2 }  +  \int_{e^{\frac{1}{27}t\epsilon^2 } }^{+\infty} \P\left(|\widehat{\gamma}_{1, t}| \geq \sqrt{\frac{3\log x}{t}} \right) \; dx   \\ 
   & \leq  &  2 e^{-\frac{1}{54}t\epsilon^2 }  +  2 \int_{e^{\frac{1}{27}t\epsilon^2 } }^{+\infty}  e^{-\frac{3}{2}\log x} \; dx  \\       
   & \leq  &  6 e^{-\frac{1}{54}t\epsilon^2 }  ,
\end{eqnarray*}
which yields
\begin{eqnarray*}
  \sum_{t=1}^{+\infty} \int_{-\infty}^{+\infty} e^{-\frac{1}{3}t v^2} \; dv  \cdot \E \left[  e^{\frac{1}{3}t\widehat{\gamma}_{1, t}^2 }  \ds1_{\{ |\widehat{\gamma}_{1, t}| \geq \frac{\epsilon}{3} \}}  \right]  
   & \leq  &  \int_{-\infty}^{+\infty}  6 \sum_{t=1}^{+\infty}  e^{-\frac{1}{3}t (v^2+\frac{1}{18}\epsilon^2)} \; dv    \\
   & \leq  &  \int_{-\infty}^{+\infty}  18 \frac{1}{v^2+\frac{1}{18}\epsilon^2} \; dv  = \frac{54\sqrt{2}\pi}{\epsilon} .  \\      
\end{eqnarray*}
Putting together all the steps finishes the proof.
\end{proof}

\bibliographystyle{plainnat}
\bibliography{newbib}

\begin{thebibliography}{14}
\providecommand{\natexlab}[1]{#1}
\providecommand{\url}[1]{\texttt{#1}}
\expandafter\ifx\csname urlstyle\endcsname\relax
  \providecommand{\doi}[1]{doi: #1}\else
  \providecommand{\doi}{doi: \begingroup \urlstyle{rm}\Url}\fi

\bibitem[Agrawal and Goyal(2012{\natexlab{a}})]{AG12}
S.~Agrawal and N.~Goyal.
\newblock Analysis of {T}hompson sampling for the multi-armed bandit problem.
\newblock In \emph{Proceedings of the 25th Annual Conference on Learning Theory
  (COLT)}, 2012{\natexlab{a}}.

\bibitem[Agrawal and Goyal(2012{\natexlab{b}})]{AG12b}
S.~Agrawal and N.~Goyal.
\newblock Further optimal regret bounds for thompson sampling,
  2012{\natexlab{b}}.
\newblock arXiv:1209.3353.

\bibitem[Audibert and Bubeck(2009)]{AB09}
J.-Y. Audibert and S.~Bubeck.
\newblock Minimax policies for adversarial and stochastic bandits.
\newblock In \emph{Proceedings of the 22nd Annual Conference on Learning Theory
  (COLT)}, 2009.

\bibitem[Audibert and Bubeck(2010)]{AB10}
J.-Y. Audibert and S.~Bubeck.
\newblock Regret bounds and minimax policies under partial monitoring.
\newblock \emph{Journal of Machine Learning Research}, 11:\penalty0 2635--2686,
  2010.

\bibitem[Auer et~al.(2002)Auer, Cesa-Bianchi, and Fischer]{ACF02}
P.~Auer, N.~Cesa-Bianchi, and P.~Fischer.
\newblock Finite-time analysis of the multiarmed bandit problem.
\newblock \emph{Machine Learning Journal}, 47\penalty0 (2-3):\penalty0
  235--256, 2002.

\bibitem[Bubeck and Cesa-Bianchi(2012)]{BC12}
S.~Bubeck and N.~Cesa-Bianchi.
\newblock Regret analysis of stochastic and nonstochastic multi-armed bandit
  problems.
\newblock \emph{Foundations and Trends in Machine Learning}, 5\penalty0
  (1):\penalty0 1--122, 2012.

\bibitem[Bubeck et~al.(2009)Bubeck, Munos, and Stoltz]{BMS09}
S.~Bubeck, R.~Munos, and G.~Stoltz.
\newblock Pure exploration in multi-armed bandits problems.
\newblock In \emph{Proceedings of the 20th International Conference on
  Algorithmic Learning Theory (ALT)}, 2009.

\bibitem[Bubeck et~al.(2013)Bubeck, Perchet, and Rigollet]{BPR13}
S.~Bubeck, V.~Perchet, and P.~Rigollet.
\newblock Bounded regret in stochastic multi-armed bandits.
\newblock In \emph{Proceedings of the 26th Annual Conference on Learning Theory
  (COLT)}, 2013.

\bibitem[Chapelle and Li(2011)]{CLi11}
O.~Chapelle and L.~Li.
\newblock An empirical evaluation of {T}hompson sampling.
\newblock In \emph{Advances in Neural Information Processing Systems (NIPS)},
  2011.

\bibitem[Gittins(1979)]{Git79}
J.C. Gittins.
\newblock Bandit processes and dynamic allocation indices.
\newblock \emph{Journal Royal Statistical Society Series B}, 14:\penalty0
  148--167, 1979.

\bibitem[Kaufmann et~al.(2012)Kaufmann, Korda, and Munos]{KKM12}
E.~Kaufmann, N.~Korda, and R.~Munos.
\newblock {T}hompson sampling: an asymptotically optimal finite-time analysis.
\newblock In \emph{Proceedings of the 23rd International Conference on
  Algorithmic Learning Theory (ALT)}, 2012.

\bibitem[Robbins(1952)]{Rob52}
H.~Robbins.
\newblock Some aspects of the sequential design of experiments.
\newblock \emph{Bulletin of the American Mathematics Society}, 58:\penalty0
  527--535, 1952.

\bibitem[Russo and Roy(2013)]{RVR13}
D.~Russo and B.~Van Roy.
\newblock Learning to optimize via posterior sampling, 2013.
\newblock arXiv:1301.2609.

\bibitem[Thompson(1933)]{Tho33}
W.~Thompson.
\newblock On the likelihood that one unknown probability exceeds another in
  view of the evidence of two samples.
\newblock \emph{Bulletin of the American Mathematics Society}, 25:\penalty0
  285--294, 1933.

\end{thebibliography}
\end{document}